\documentclass[letterpaper, 10 pt, conference]{ieeeconf}  

\IEEEoverridecommandlockouts 

\usepackage{amsmath,amsfonts}
\usepackage{graphicx}
\usepackage{textcomp}
\usepackage{comment}

\usepackage{amsthm}
\usepackage[utf8]{inputenc}
\usepackage[T1]{fontenc}
\usepackage{enumerate}
\usepackage{soul}
\usepackage[dvipsnames,table]{xcolor} 
\usepackage{hyperref}
\usepackage{siunitx}
\usepackage{multirow}
\usepackage{subfig}
\usepackage{booktabs}
\usepackage[ruled,vlined]{algorithm2e}
\usepackage{algpseudocode}
\DontPrintSemicolon

\usepackage{amssymb}    
\usepackage{mathtools}  
\usepackage{bm}         
\usepackage{accents}
\usepackage{caption}
\usepackage{colortbl}  
\usepackage{array}     
\usepackage{adjustbox} 
\usepackage{wasysym}  
\usepackage{pifont}
\usepackage{stfloats}
\usepackage{url}
\usepackage{verbatim}
\usepackage{makecell}
\usepackage{tikz}
\usepackage{color}  
\usepackage{lipsum} 
\usepackage{float}

\usepackage[style=ieee,backend=biber,sortcites=true]{biblatex}

\usepackage{fontawesome5}

\renewcommand*{\multicitedelim}{\addcomma\space}
\DeclareCiteCommand{\cite}[\mkbibbrackets]
  {\usebibmacro{prenote}}
  {\usebibmacro{citeindex}\printfield{labelnumber}}
  {\multicitedelim}
  {\usebibmacro{postnote}}
  
\AtBeginBibliography{\scriptsize}  
\definecolor{blue}{HTML}{4E95D9}
\definecolor{gold}{HTML}{E5AE1C}
\definecolor{safe}{HTML}{4169E1}
\definecolor{unsafe}{HTML}{C41230}

\newtheorem{theorem}{Theorem}

\newtheorem{lemma}{Lemma}
\newtheorem{definition}{Definition}
\newtheorem{assumption}{Assumption}
\newtheorem{remark}{Remark}

\DeclareMathOperator*{\argmin}{argmin}

\usetikzlibrary{shapes, arrows, arrows.meta, positioning, shapes.geometric, decorations.pathreplacing, calc, patterns}

\definecolor{lbacbf}{HTML}{2E79BF}
\newcommand{\lbacbf}{\textbf{\texttt{\textcolor{lbacbf}{LBA-CBF}}}}

\definecolor{cOracle}{HTML}{6E6E6E}
\definecolor{cRaCBF} {HTML}{6A3D9A}
\definecolor{cLone}  {HTML}{CC79A7}
\definecolor{cDOB}   {HTML}{009E73}
\definecolor{cRCBF}  {HTML}{8C564B}
\definecolor{cNom}   {HTML}{FFA500}
\definecolor{nomGray}{HTML}{7F7F7F}

\AtEveryBibitem{%
  \clearfield{doi}\clearfield{url}\clearfield{urlyear}\clearfield{urlmonth}\clearfield{urlday}%
  \clearfield{isbn}\clearfield{issn}\clearfield{eprint}\clearfield{month}%
  \clearlist{language}%
  \ifentrytype{inproceedings}{%
    \clearlist{publisher}\clearlist{organization}\clearlist{location}%
    \clearname{editor}\clearfield{series}\clearfield{volume}}{}%
}

\hypersetup{hidelinks}

\begin{document}

\title{\LARGE \bf \lbacbf: Rapidly Adaptive Safety Filters \\ via Parallel Dynamics Inference}

\author{Maitham F. AL-Sunni%
\quad\quad\quad Timeea-Andreea Radu%
\quad\quad\quad Hassan Almubarak%
\quad\quad\quad Henry Z. Liao%
\\%
\quad\quad\quad Michael Görner%
\quad\quad\quad Francesco Maurelli%
\quad\quad\quad John M. Dolan%
\thanks{M. F. AL-Sunni is with the Department of Electrical \& Computer Engineering, Carnegie Mellon University, Pittsburgh, PA, USA. Email: {\tt  malsunni@andrew.cmu.edu}}%
\thanks{T.~Radu, M. Görner, and F. Maurelli are with the School of Computer Science \& Engineering, Constructor University, Bremen, Germany. Email: {\tt  \{tradu,migoerner,fmaurelli\}@constructor.university}}%
\thanks{H. Almubarak is with the Control \& Instrumentation Engineering Department, King Fahd University of Petroleum and Minerals, Dhahran, Saudi Arabia. Email: {\tt halmubarak@kfupm.edu.sa}}%
\thanks{H. Z. Liao and J. M. Dolan are with the Robotics Institute, Carnegie Mellon University, Pittsburgh, PA, USA. Email: {\tt \{hzl,jdolan\}@andrew.cmu.edu}}%
}

\maketitle

\begin{abstract}
Control barrier functions (CBFs) certify commands through an assumed dynamics model, so an abrupt, unmeasured regime change can undermine the certificate exactly when safety matters most. We present \textcolor{lbacbf}{Look-Back Adaptive Control Barrier Functions (\lbacbf{})}, which rank a finite bank of candidate dynamics by recent prediction error over a short look-back window and enforce the high-order CBF condition against every model within a tolerance of the best, spanning best-fit adaptation to full-bank robust filtering. The dynamics may depend nonlinearly on the unknown parameters, and no switching model or continuously parameterized estimator is required. We prove that any feasible filtered input satisfies the true CBF condition whenever a safety-representative candidate is retained. In quadrotor simulation with abrupt wind reversals and an unknown payload, \lbacbf{} is safe and reaches the goal from all random initial conditions, matching an oracle, while adaptive and robust baselines achieve $0$--$88\%$ success. Banks of up to $250,000$ models run inside the control loop, and Crazyflie~2.1 and F1TENTH experiments demonstrate adaptation to wind, payload release, and varying tire--road friction. Code, videos,
and project details are available at:
\vspace{-3pt}
\begin{center}    
\href{https://lla-control.github.io}{\textcolor{lbacbf}{https://lla-control.github.io}}
\end{center}
\end{abstract}

\section{Introduction}

A control safety filter is only as trustworthy as the dynamics model used inside its barrier condition. Control barrier functions (CBF) \cite{ames2017cbf} keep the state inside a safe set with minimal deviation from a nominal controller, but certify each input through the predicted evolution of the barrier under an assumed dynamics model. An abrupt, unmeasured change of \emph{dynamics regime}, such as an unknown payload, a sudden wind gust, or a change in tire--road friction, therefore undermines the filter exactly when safety matters most.

Two main remedies exist. Robust CBFs certify the barrier against a fixed worst-case uncertainty set~\cite{jankovic2018rcbf}, which can be conservative in a way that online data cannot reduce. Adaptive and learning-based CBFs reduce this conservatism by estimating unknown parameters or model errors online, but abrupt regime changes must be absorbed through an estimation transient. Robust adaptive CBFs combine adaptation with set-based uncertainty descriptions~\cite{lopez2021racbf}, but under the standard constant-parameter formulation the admissible parameter set contracts as measurements accumulate and does not re-expand when the dynamics change. Safety certification thus remains tied either to a fixed uncertainty description or to the transient of a continuously updated estimate.

We propose \textcolor{lbacbf}{\bf Look-Back Adaptive Control Barrier Functions (\lbacbf)}, whose architecture is illustrated in \autoref{fig:framework}. Rather than committing the safety filter to a single continuously evolving estimate, \lbacbf\ maintains a finite bank of candidate dynamics models and, at every control step, evaluates them in parallel by their prediction error over a look-back window of length $W$. The best-fitting model can be used directly for rapid adaptation, while a prediction-plausible set retains models whose recent errors lie within a tolerance of the minimum and enforces the CBF condition for all retained models. After $W$ new transitions, pre-change observations no longer contribute to the residual, so the filter adapts to changes without a switching model. 

\begin{figure}
    \centering
    \includegraphics[width=\linewidth]{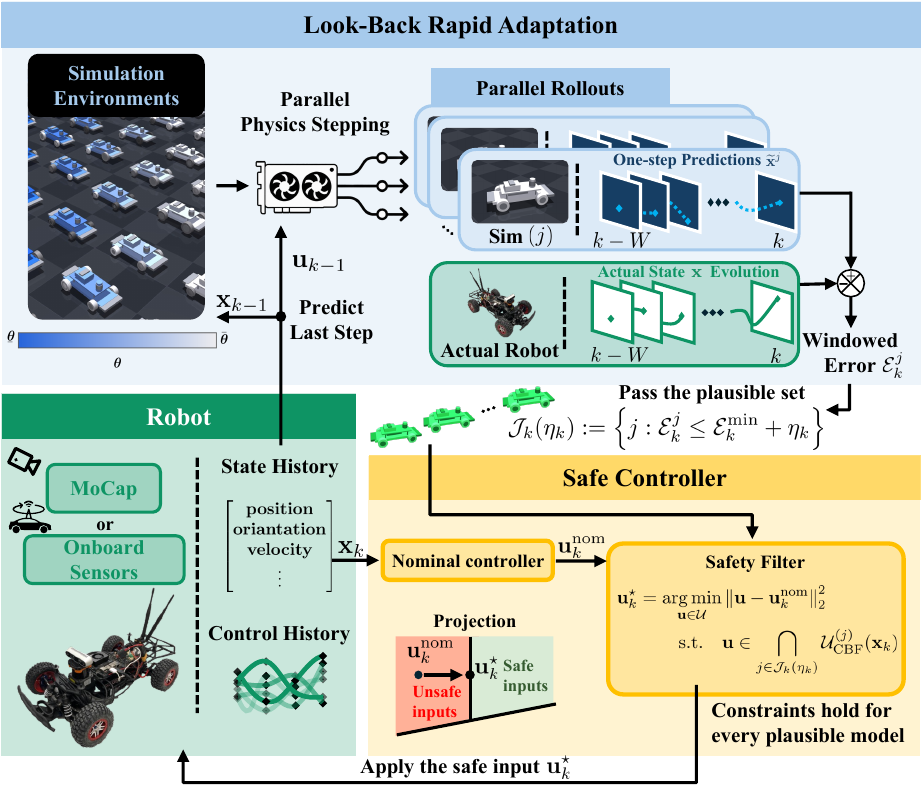}
    \caption{Overview of the \lbacbf{} framework. The look-back
    stage ranks candidate models by recent prediction error
    and retains a prediction-plausible set, while the look-ahead computes a control input satisfying the CBF constraints
    for every retained model.}
    \label{fig:framework}
    \vspace{-15pt}
\end{figure}

\subsection{Related Work}

\subsubsection{Robust and adaptive safety}
CBFs have been extended to high-relative-degree and discrete-time safety constraints~\cite{xiao2019hocbf,xiong_discrete-time_2023}. Under model uncertainty, robust CBFs certify safety against a prescribed uncertainty set~\cite{jankovic2018rcbf}, while adaptive CBFs reduce conservatism through online parameter adaptation~\cite{taylor2020adaptive}. Robust adaptive CBFs (RaCBFs) combine adaptation with set-membership identification~\cite{lopez2021racbf}, with extensions to high-relative-degree constraints~\cite{cohen2022hocabf}. Alternatively, safety can be embedded into an adaptive controller by augmenting the parametrically uncertain dynamics with barrier states~\cite{alsunni2025seac}. More recent approaches accommodate time-varying uncertainty: robust CBFs in~\cite{pati2025parametervarying} admit time-varying parameters that enter the dynamics nonlinearly and use set-membership estimation to reduce conservatism, while~\cite{diget2026timevarying} combines finite-time estimation of time-varying parameters with RaCBFs under a structured parameterization and bounded parameter variation. Disturbance-observer~\cite{wang2023dobcbf} and $\mathcal{L}_1$~\cite{nguyen2022l1cbf} approaches instead estimate and compensate model mismatch through observer or adaptive dynamics. In contrast, \lbacbf\ does not require a continuously parameterized estimator or an affine-in-parameter uncertainty model: it ranks a finite bank of candidate dynamics directly from recent prediction error and certifies the barrier against the dynamics hypotheses retained by the moving look-back window.

\subsubsection{Switched dynamics and regime change}
CBF methods for hybrid and switched systems construct mode-dependent safety conditions under a specified hybrid structure~\cite{yang2024local}, while learning-based approaches have estimated mode-specific barrier residuals for switching systems~\cite{aali2024piecewise}. In contrast, \lbacbf\ treats the active dynamics regime as latent and infers it from recent state--action transitions without requiring known guards or a transition model.

\subsubsection{Multiple-model methods}
Adaptation through a finite bank of candidate dynamics is a classical multiple-model control idea~\cite{narendra1997multiple}. More recently, multiple-model reasoning has appeared in safety-critical control:~\cite{laouar2024jmls} combines CBF constraints with MPC for jump Markov linear systems under uncertain mode estimates, while~\cite{li2024supervisory} uses a supervisory multi-observer bank to estimate unknown parameters within an adaptive safety controller. Most closely related to our inference mechanism, the look-back bank of~\cite{alsunni2025llampc,liao2026llampc_embedded} ranks candidate vehicle models over a short window and selects the best-fitting model for adaptive racing model predictive control (MPC). \lbacbf\ instead uses the windowed prediction residual for safety certification, retaining the best-fitting dynamics or a prediction-plausible subset and enforcing the high-order CBF (HOCBF) against them.

\subsection{Contributions}
The main contributions of this work are:
\begin{itemize}

\item \textbf{Look-back adaptive safety filtering.} \lbacbf\ ranks a finite bank of candidate dynamics by recent prediction error and enforces the CBF condition against the best-fitting models. The parameters may enter the dynamics nonlinearly, including through the input gain, and no switching model or continuously parameterized estimator is required.

\item \textbf{Safety guarantee and adaptive-to-robust filtering.} Every feasible \lbacbf\ input satisfies the CBF condition of the true dynamics whenever the retained set contains a safety-representative model. A finite tolerance that guarantees this exists whenever the bank contains such a model, and the same tolerance spans best-fit adaptation and full-bank robust filtering.

\item \textbf{Real-time implementation and robotic validation.} A parallel JAX implementation runs banks of up to $250,000$ models inside the control loop. In quadrotor simulation, \lbacbf\ is safe and reaches the goal from all random initial conditions, outperforming adaptive and robust baselines. Experiments on a Crazyflie~2.1 and an F1TENTH race car demonstrate adaptation to wind, a swinging payload with an abrupt release, and varying tire--road friction.

\end{itemize}

\section{Problem Formulation}
\label{sec:problem}

\subsection{Robotic systems with changing dynamics regimes}
\label{sec:changing_dynamics}

Consider the discrete-time robotic system
\begin{equation} \label{eq: system parametric dynamics}
\mathbf{x}_{k+1} =  F(\mathbf{x}_{k},\mathbf{u}_{k};\boldsymbol{\theta}_{k}) :=  f(\mathbf{x}_{k};\boldsymbol{\theta}_{k})  + g(\mathbf{x}_{k};\boldsymbol{\theta}_{k})\mathbf{u}_{k}
\end{equation}
where $k\in\mathbb{Z}_{\geq 0}$, $\mathbf{x}_{k}\in\mathcal{X}\subseteq\mathbb{R}^{n}$, $\mathbf{u}_{k}\in\mathcal{U}\subseteq\mathbb{R}^{m}$, and $\boldsymbol{\theta}_{k}\in\Theta\subseteq\mathbb{R}^{p}$ denote the time index, state, control input, and unobserved dynamics parameter, respectively. We assume that $\Theta$ and $\mathcal{U}$ are compact.

$\boldsymbol{\theta}_{k}$ parametrizes $F$ to represent the active dynamics regime of the robot. Such regimes arise from changes in payload, contact, terrain friction, or aerodynamic loading. We do not assume that the active regime is directly observed or that its evolution follows a known switching law. Moreover, $F(\mathbf{x},\mathbf{u};\boldsymbol{\theta})$ need not be affine in the unknown parameter $\boldsymbol{\theta}$.

Safety of the robot is represented by the path constraint $h:\mathcal{X}\rightarrow\mathbb{R}$, which defines the safe set
\begin{equation} \label{eq: safe set def}
\mathcal{S} :=  \left\{\mathbf{x}\in\mathcal{X}: h(\mathbf{x})\geq 0 \right\}
\end{equation}

A controller is \emph{safe} if every closed-loop trajectory from an admissible initial condition satisfies $h(\mathbf{x}_{k})\geq 0$ for all $k\in\mathbb{Z}_{\geq 0}$, i.e., it renders $\mathcal{S}$ controlled invariant.

\subsection{Discrete-time high-order control barrier functions}
\label{sec:hocbf_preliminaries}

We use discrete-time HOCBFs to
construct the safety filter, making the dependence on the dynamics
parameter explicit, since the filter will evaluate this
condition under different candidate models.

We say that $h$ has relative degree $r\geq 1$ for
\eqref{eq: system parametric dynamics} if the input first affects the
safety function after $r$ steps, i.e., $h(\mathbf{x}_{k+\ell})$ is
independent of the applied inputs for $\ell < r$ and depends on
$\mathbf{u}_{k}$ for $\ell = r$.

\begin{definition}[Discrete-time HOCBF \cite{agrawal2017dcbf,xiao2019hocbf,zeng2021mpccbf}]
\label{def:dcbf}
Let $\alpha_{1},\ldots,\alpha_{r}\in(0,1]$. For a parameter
$\boldsymbol{\vartheta}\in\Theta$, define
$\psi_{0}(\mathbf{x};\boldsymbol{\vartheta}) := h(\mathbf{x})$ and, for
$q=1,\ldots,r$,
\begin{equation}
\psi_{q}(\mathbf{x},\mathbf{u};\boldsymbol{\vartheta})
:=
\psi_{q-1}\bigl(F(\mathbf{x},\mathbf{u};\boldsymbol{\vartheta});\boldsymbol{\vartheta}\bigr)
- (1-\alpha_{q})\,\psi_{q-1}(\mathbf{x};\boldsymbol{\vartheta}).
\label{eq:hocbf_recursion}
\end{equation}
By the relative-degree property the right-hand side is independent of
$\mathbf{u}$ for $q\leq r-1$, so $\psi_{1},\ldots,\psi_{r-1}$ are functions of
the state alone and we write $\psi_{q}(\mathbf{x};\boldsymbol{\vartheta})$;
$\psi_{r}$ is the first term in which the input appears. The order-$r$ HOCBF
condition at time $k$ is
\begin{equation}
\psi_{r}(\mathbf{x}_{k},\mathbf{u};\boldsymbol{\vartheta}) \;\geq\; 0.
\label{eq:hocbf_condition}
\end{equation}
If \eqref{eq:hocbf_condition} is enforced at every step and
$\psi_{q}(\mathbf{x}_{0};\boldsymbol{\vartheta})\geq 0$ for
$q=0,\ldots,r-1$, then the safe set is forward
invariant~\cite{xiong_discrete-time_2023}.
\end{definition}

For $r=1$, \eqref{eq:hocbf_condition} reduces to the relative-degree-one
discrete-time CBF condition
$h\left(F(\mathbf{x}_{k},\mathbf{u};\boldsymbol{\vartheta})\right)
- (1-\alpha_{1})h(\mathbf{x}_{k}) \geq 0$.

\noindent \textbf{Problem.} Given the uncertain system \eqref{eq: system parametric dynamics}, the safe set $\mathcal{S}$, a nominal controller $\mathbf{u}_{k}^{\mathrm{nom}} = \pi_{\mathrm{nom}}(\mathbf{x}_{k})$ and recent observed state--action transitions, our objective is to construct a real-time safety filter that minimally modifies $\mathbf{u}_{k}^{\mathrm{nom}}$ while accounting for the unobserved and potentially changing dynamics regime $\boldsymbol{\theta}_{k}$.

\section{\lbacbf: Rapidly Adaptive Safety Filters}
\label{sec:approach}

The proposed \lbacbf\ framework consists of three stages. First, a finite bank of candidate dynamics models is constructed over the operating regimes. Second, a look-back module evaluates how well each candidate explains the most recent transitions and retains a set of prediction-plausible models. Third, the safety filter enforces the CBF condition against every retained model and returns the admissible input closest to the nominal command.

\subsection{Finite dynamics model bank}
\label{sec:bank}

Let $\Theta_{\mathcal{B}} :=
\{\boldsymbol{\theta}^{1},\dots,\boldsymbol{\theta}^{M}\} \subseteq \Theta$
be a finite bank of candidate parameter values. Each
$\boldsymbol{\theta}^{j}$ induces a candidate transition map, yielding the
finite family of dynamics models
\begin{equation}
\mathcal{B}
:= 
 \left\{
    F(\mathbf{x}, \mathbf{u};\, \boldsymbol{\theta}^{j})
  \right\}_{j=1}^{M} = \left\{
    F^{j}(\mathbf{x}, \mathbf{u})
  \right\}_{j=1}^{M}.
\label{eq:model_bank}
\end{equation}
The bank may be constructed from physically meaningful parameter ranges,
system-identification results, expert-designed operating modes,
simulation-based domain randomization, or other finite collections of
candidate dynamics. In our experiments, we use structured grids over
prescribed parameter ranges because they provide simple and reproducible
coverage of the operating regimes. However, the
\lbacbf\ construction itself requires only a finite candidate bank and is
not tied to a particular sampling strategy.

\subsection{Look-back model plausibility}
\label{sec:lookback}

At every control step, the bank produces one-step predictions in parallel
from the \emph{measured} previous transition: the pair
$(\mathbf{x}_{k-1},\mathbf{u}_{k-1})$ is broadcast to all $M$ candidate
models and stepped in one batched evaluation,
\begin{equation}
  \mathcal{B}_{k}
  =
  \left\{
    F^{j}(\mathbf{x}_{k-1}, \mathbf{u}_{k-1})
  \right\}_{j=1}^{M}
  =
  \left\{ \widetilde{\mathbf{x}}^{j}_{k} \right\}_{j=1}^{M},
  \label{eq:bank_predictions}
\end{equation}
where $\mathcal{B}_{k}$ is the bank evaluated at
step $k$ and $\widetilde{\mathbf{x}}_{k}^{j}$ is the prediction of candidate $j$. Once
$\mathbf{x}_{k}$ is measured, each candidate is scored by its quadratic
one-step prediction error
\begin{equation}
e_{k}^{j}
:=
\bigl\| \mathbf{x}_{k} - \widetilde{\mathbf{x}}_{k}^{j} \bigr\|_{\Lambda_{k}}^{2}
= (\mathbf{x}_{k}-\widetilde{\mathbf{x}}_{k}^{j})^{\!\top}\Lambda_{k}(\mathbf{x}_{k}-\widetilde{\mathbf{x}}_{k}^{j}).
\label{eq:one_step_prediction_error}
\end{equation}
Here $\Lambda_{k}$ is diagonal with positive entries and weights each state by its recent
innovation scale, so that dimensions with different units contribute comparably;
$\Lambda_{k}=\mathbf{I}$ recovers the unweighted residual.
The look-back module aggregates these scores over the most recent $W$
transitions through the windowed prediction loss
\begin{equation}
\mathcal{E}_{k}^{j} := \mathcal{E}_{k} (\boldsymbol{\theta}^{j}) = \sum_{i=k-W+1}^{k}e_{i}^{j},
\label{eq:windowed_residual}
\end{equation}
which is maintained recursively as $\mathcal{E}_{k}^{j} = \mathcal{E}_{k-1}^{j}
+ e_{k}^{j} - e_{k-W}^{j}$. This recursive structure allows each candidate's score to be updated in constant time
per step and the look-back module in $O(M)$ per step, independent of the
window length (for $k \leq W$, the sum accumulates without subtraction).

The window length $W\in\mathbb{Z}_{\geq 1}$ controls the tradeoff between
responsiveness to changes in the active regime and noise rejection. A
longer window accumulates prediction errors over more transitions and
therefore produces a smoother residual landscape but reacts more slowly to
a change in the active dynamics. A shorter window, on the other hand,
responds more rapidly to regime changes, at the cost of greater sensitivity
to measurement and process noise.

A natural regime estimate is the empirical-risk minimizer
\begin{align}
\widehat{j}_{k} \in \argmin_{j\in\{1,\ldots,M\}} \mathcal{E}_{k}^{j}, \quad {\text{ with }} \quad
\widehat{\boldsymbol{\theta}}_{k} :=
\boldsymbol{\theta}^{\widehat{j}_{k}}.
\label{eq:theta_hat}
\end{align}

For safety filtering, however, collapsing the residual landscape to a
single minimizer can be overly optimistic. Over a finite look-back window,
multiple candidate models may explain the observed transitions almost
equally well and hence the recent data may not support a unique regime
selection. More importantly, the residual criterion measures predictive
consistency with past transitions, whereas safety depends on the CBF
evolution induced by the candidate dynamics. Consequently, the
minimum-residual model need not be the model most relevant to the current
safety constraint.

Therefore, we retain the model ambiguity that remains unresolved by the
recent data rather than committing to a single candidate. Let
\begin{equation}
\mathcal{E}_{k}^{\min} := \min_{j\in\{1,\ldots,M\}} \mathcal{E}_{k}^{j}.
\label{eq:E_min}
\end{equation}
Then, for a tolerance $\eta_{k}\geq 0$, we define the
\emph{prediction-plausible set}
\begin{equation}
\mathcal{J}_{k}(\eta_{k}) := \left\{ j\in\{1,\ldots,M\} : \mathcal{E}_{k}^{j} \leq \mathcal{E}_{k}^{\min} + \eta_{k} \right\}.
\label{eq:plausible_set}
\end{equation}

\subsection{Multi-model \lbacbf\ safety filter}
\label{sec:lbacbf_filter}

For each candidate model $j$, define its model-dependent CBF value at
time $k$ by
\begin{equation}
H_{k}^{j}(\mathbf{u}) := \psi_{r} \left( \mathbf{x}_{k}, \mathbf{u}; \boldsymbol{\theta}^{j} \right),
\label{eq:model_barrier_value}
\end{equation}
where $\psi_{r}$ is generated according to \autoref{def:dcbf}. Thus,
$H_{k}^{j}(\mathbf{u})\geq 0$ is precisely the CBF condition that would
be imposed if $\boldsymbol{\theta}^{j}$ were the active dynamics regime.

The \lbacbf\ safety filter enforces the model-dependent CBF condition
for every candidate in the plausible set:
\begin{align}
\mathbf{u}_{k}^{\star} \in \argmin_{\mathbf{u}\in\mathcal{U}} \quad &
\frac{1}{2} \left\| \mathbf{u} - \mathbf{u}_{k}^{\mathrm{nom}} \right\|_{2}^{2},
\label{eq:lbacbf_filter}
\\
\text{s.t.} \quad & H_{k}^{j}(\mathbf{u}) \geq 0, \qquad \forall j\in\mathcal{J}_{k}(\eta_{k}). \nonumber
\end{align}
When
the model-dependent CBF constraints are affine in $\mathbf{u}$,
\eqref{eq:lbacbf_filter} reduces to a quadratic program (QP); more generally,
the same construction defines a constrained safety-filter optimization.
We define the corresponding feasible control set
\begin{equation}
\hspace{-3mm}
\mathcal{U}_{k}^{\mathrm{LBA}}(\eta_{k}) := \left\{\mathbf{u}\in\mathcal{U} : H_{k}^{j}(\mathbf{u})\geq 0, \quad
\forall j\in\mathcal{J}_{k}(\eta_{k}) \right\}.
\label{eq:lba_feasible_set}
\end{equation}

\begin{remark}[Adaptive-to-robust continuum]
\label{rem:eta_continuum}
If $0\leq\eta_{k}^{1}\leq\eta_{k}^{2}$, then
$\mathcal{J}_{k}(\eta_{k}^{1}) \subseteq \mathcal{J}_{k}(\eta_{k}^{2})$
and consequently
$\mathcal{U}_{k}^{\mathrm{LBA}}(\eta_{k}^{2}) \subseteq
\mathcal{U}_{k}^{\mathrm{LBA}}(\eta_{k}^{1})$: increasing $\eta_{k}$ only
adds model-dependent CBF constraints and weakly shrinks the admissible
control set. Letting
$\mathcal{E}_{k}^{\max} := \max_{j\in\{1,\ldots,M\}} \mathcal{E}_{k}^{j}$,
the extreme $\eta_{k}=0$ recovers a best-fit, minimum-residual safety
filter, while any
$\eta_{k} \geq \mathcal{E}_{k}^{\max} - \mathcal{E}_{k}^{\min}$ yields
$\mathcal{J}_{k}(\eta_{k}) = \{1,\ldots,M\}$, i.e., the CBF condition is
enforced over the entire model bank. The tolerance $\eta_{k}$ therefore
interpolates between single-model adaptation and full-bank robust
filtering.
\end{remark}

Both stages of a control step parallelize across models: the $M$ prediction
losses are independent across candidates, as are the model-dependent CBF
evaluations over $\mathcal{J}_{k}(\eta_{k})$. We exploit this with vectorized
JAX~\cite{JAX}, which runs on CPU or GPU and keeps large banks inside the
control loop.

\subsection{Safety guarantee}
\label{sec:safety_guarantee}

The prediction-plausible set determines which candidates the filter accounts for, but prediction error alone does not connect a candidate to the true safety constraint, so we introduce a bank-adequacy condition in the barrier domain.

Define the CBF value of the true current dynamics as
\begin{equation}
H_{k}^{\mathrm{true}}(\mathbf{u}) := \psi_{r}\left(\mathbf{x}_{k},\mathbf{u};\boldsymbol{\theta}_{k}\right).
\label{eq:true_barrier_value}
\end{equation}

\begin{assumption}[Safety-representative bank]
\label{ass:safety_representative}
For every time $k$, the candidate bank contains at least one model $F^{j_{k}^{s}}$ whose CBF prediction is no more optimistic than that of the true current dynamics. Specifically,
\begin{equation}
H_{k}^{j_{k}^{s}}(\mathbf{u}) \leq H_{k}^{\mathrm{true}}(\mathbf{u}), \qquad \forall \mathbf{u}\in\mathcal{U}
\label{eq:safety_representative}
\end{equation}
\end{assumption}

\autoref{ass:safety_representative} does not require $\boldsymbol{\theta}_{k}$ to lie in the bank, only a candidate whose barrier prediction is conservative.

\begin{lemma}[Safety transfer]
\label{lemma:safety_transfer}
Suppose \autoref{ass:safety_representative} holds at time $k$, the safety-representative candidate satisfies $j_{k}^{s}\in\mathcal{J}_{k}(\eta_{k})$, and the safety-filter problem \eqref{eq:lbacbf_filter} is feasible. Then its solution $\mathbf{u}_{k}^{\star}$ satisfies the CBF condition for the true current dynamics: $H_{k}^{\mathrm{true}}(\mathbf{u}_{k}^{\star}) \geq 0$
\end{lemma}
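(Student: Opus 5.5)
The argument is a direct chaining of feasibility with \autoref{ass:safety_representative}; no invariance or limiting argument is needed at this stage. First, since \eqref{eq:lbacbf_filter} is assumed feasible, its solution $\mathbf{u}_{k}^{\star}$ lies in the feasible control set $\mathcal{U}_{k}^{\mathrm{LBA}}(\eta_{k})$ of \eqref{eq:lba_feasible_set}. In particular, $\mathbf{u}_{k}^{\star}\in\mathcal{U}$ and $H_{k}^{j}(\mathbf{u}_{k}^{\star})\geq 0$ for every $j\in\mathcal{J}_{k}(\eta_{k})$. One small point deserves care here. Feasibility together with compactness of $\mathcal{U}$ and continuity of the constraints would guarantee that a minimizer exists. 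However, the lemma only needs \emph{some} element of the $\argmin$, and any such element is feasible by definition. I would therefore phrase the argument for an arbitrary feasible input, which also yields the stronger statement promised in the contributions ("every feasible input").

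Second, I would specialize the feasibility condition to the index $j=j_{k}^{s}$. This is allowed because the hypothesis $j_{k}^{s}\in\mathcal{J}_{k}(\eta_{k})$ places the safety-representative candidate among the enforced constraints, giving $H_{k}^{j_{k}^{s}}(\mathbf{u}_{k}^{\star})\geq 0$.

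Third, I would invoke \eqref{eq:safety_representative} at the point $\mathbf{u}=\mathbf{u}_{k}^{\star}$. This is legitimate because the inequality holds for all $\mathbf{u}\in\mathcal{U}$ and $\mathbf{u}_{k}^{\star}\in\mathcal{U}$. Combining the two inequalities gives
\begin{equation*}
H_{k}^{\mathrm{true}}(\mathbf{u}_{k}^{\star}) \;\geq\; H_{k}^{j_{k}^{s}}(\mathbf{u}_{k}^{\star}) \;\geq\; 0,
\end{equation*}
which is the claimed true CBF condition $\psi_{r}(\mathbf{x}_{k},\mathbf{u}_{k}^{\star};\boldsymbol{\theta}_{k})\geq 0$.

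There is no real obstacle in this proof. The only step needing attention is checking that the quantifiers line up: the assumption is pointwise in $\mathbf{u}$ over all of $\mathcal{U}$, and membership of $j_{k}^{s}$ in the plausible set is at the same time $k$. The substantive difficulty, namely ensuring $j_{k}^{s}\in\mathcal{J}_{k}(\eta_{k})$, has been moved into the hypotheses. That condition is presumably handled in a subsequent result by choosing $\eta_{k}\geq\mathcal{E}_{k}^{j_{k}^{s}}-\mathcal{E}_{k}^{\min}$, consistent with \autoref{rem:eta_continuum}.
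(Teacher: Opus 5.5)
Your proposal is correct and matches the paper's proof: feasibility gives $H_{k}^{j_{k}^{s}}(\mathbf{u}_{k}^{\star})\geq 0$ because $j_{k}^{s}\in\mathcal{J}_{k}(\eta_{k})$, and applying \eqref{eq:safety_representative} at $\mathbf{u}=\mathbf{u}_{k}^{\star}$ yields $H_{k}^{\mathrm{true}}(\mathbf{u}_{k}^{\star})\geq H_{k}^{j_{k}^{s}}(\mathbf{u}_{k}^{\star})\geq 0$. Your remark that the same argument holds for any feasible input, not only the minimizer, is accurate and consistent with the paper's stated contribution.
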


\begin{proof}
Since $j_{k}^{s}\in\mathcal{J}_{k}(\eta_{k})$, the constraints of \eqref{eq:lbacbf_filter} give $H_{k}^{j_{k}^{s}}(\mathbf{u}_{k}^{\star}) \geq 0$, so by \autoref{ass:safety_representative}, $H_{k}^{\mathrm{true}}(\mathbf{u}_{k}^{\star}) \geq H_{k}^{j_{k}^{s}}(\mathbf{u}_{k}^{\star}) \geq 0$.
\end{proof}

\autoref{lemma:safety_transfer} reduces satisfaction of the true CBF condition to inclusion of a safety-representative candidate in the prediction-plausible set. We next establish that a sufficiently large tolerance always exists under \autoref{ass:safety_representative}.

\begin{lemma}[Existence of a sufficient plausible-set tolerance]
\label{lemma:eta_existence}
Let $F^{j_{k}^{s}}$ be a safety-representative candidate satisfying \autoref{ass:safety_representative}, and define
$\eta_{k}^{s} := \mathcal{E}_{k}^{j_{k}^{s}} - \mathcal{E}_{k}^{\min}$
Then,
\begin{equation}
0 \leq \eta_{k}^{s} \leq \mathcal{E}_{k}^{\max} - \mathcal{E}_{k}^{\min}
\label{eq:eta_safe_bounds}
\end{equation}
and every satisfying $\eta_{k}\geq\eta_{k}^{s}$ guarantees $j_{k}^{s}\in\mathcal{J}_{k}(\eta_{k})$.
\end{lemma}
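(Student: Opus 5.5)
The argument uses only the definitions \eqref{eq:windowed_residual}, \eqref{eq:E_min} and \eqref{eq:plausible_set}, together with the maximum $\mathcal{E}_{k}^{\max}$ from \autoref{rem:eta_continuum}. No property of the dynamics or of the barrier is needed. The safety-representative index $j_{k}^{s}$ from \autoref{ass:safety_representative} enters only as some element of $\{1,\ldots,M\}$.

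First, I will establish the bounds \eqref{eq:eta_safe_bounds}. The bank is finite, so $\mathcal{E}_{k}^{\min}$ and $\mathcal{E}_{k}^{\max}$ are a minimum and a maximum over a finite set and are attained. They are also finite reals, since each $e_{i}^{j}$ is a finite weighted squared norm with $\Lambda_{k}$ positive diagonal. Because $j_{k}^{s}\in\{1,\ldots,M\}$, we have
\begin{equation*}
\mathcal{E}_{k}^{\min}\leq \mathcal{E}_{k}^{j_{k}^{s}}\leq \mathcal{E}_{k}^{\max}.
\end{equation*}
Subtracting $\mathcal{E}_{k}^{\min}$ throughout gives $0\leq\eta_{k}^{s}\leq\mathcal{E}_{k}^{\max}-\mathcal{E}_{k}^{\min}$.

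Second, I will show the membership claim. Take any $\eta_{k}\geq\eta_{k}^{s}$; the lower bound just shown makes $\eta_{k}$ an admissible tolerance, i.e. $\eta_{k}\geq 0$. By the definition of $\eta_{k}^{s}$,
\begin{equation*}
\mathcal{E}_{k}^{j_{k}^{s}}=\mathcal{E}_{k}^{\min}+\eta_{k}^{s}\leq\mathcal{E}_{k}^{\min}+\eta_{k}.
\end{equation*}
This is exactly the membership test in \eqref{eq:plausible_set}, so $j_{k}^{s}\in\mathcal{J}_{k}(\eta_{k})$. Equivalently, $\mathcal{J}_{k}(\eta_{k}^{s})\subseteq\mathcal{J}_{k}(\eta_{k})$ by the monotonicity in \autoref{rem:eta_continuum}.

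No step is a genuine obstacle. The only point needing care is that the extrema are attained and finite, which is what makes $\eta_{k}^{s}$ a well-defined nonnegative real. I may add a closing remark: the upper bound shows that the sufficient tolerance never exceeds the full-bank threshold of \autoref{rem:eta_continuum}, so such a tolerance always exists. Combined with \autoref{lemma:safety_transfer}, any feasible filtered input then satisfies $H_{k}^{\mathrm{true}}(\mathbf{u}_{k}^{\star})\geq 0$.
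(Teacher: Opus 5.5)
Your proof is correct and matches the paper's argument essentially line for line: the bounds follow from $\mathcal{E}_{k}^{\min}\leq\mathcal{E}_{k}^{j_{k}^{s}}\leq\mathcal{E}_{k}^{\max}$ over the finite bank, and membership follows from rewriting $\mathcal{E}_{k}^{j_{k}^{s}}=\mathcal{E}_{k}^{\min}+\eta_{k}^{s}\leq\mathcal{E}_{k}^{\min}+\eta_{k}$ and applying the definition of $\mathcal{J}_{k}(\eta_{k})$. Your added remarks that the extrema are attained and finite are harmless, and the paper leaves them implicit.
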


\begin{proof}
Since $\mathcal{E}_{k}^{\min}$ is the minimum loss over the bank,$\eta_{k}^{s} = \mathcal{E}_{k}^{j_{k}^{s}} - \mathcal{E}_{k}^{\min} \geq 0$. Furthermore, $\mathcal{E}_{k}^{j_{k}^{s}}\leq\mathcal{E}_{k}^{\max}$, which gives \eqref{eq:eta_safe_bounds}. Thus, for any $\eta_{k}\geq\eta_{k}^{s}$, $\mathcal{E}_{k}^{j_{k}^{s}} = \mathcal{E}_{k}^{\min} + \eta_{k}^{s} \leq \mathcal{E}_{k}^{\min} + \eta_{k}$
Hence, by \eqref{eq:plausible_set}, $j_{k}^{s}\in\mathcal{J}_{k}(\eta_{k})$.
\end{proof}

\begin{remark}[Selection of $\eta_{k}$]
\label{rem:eta_selection}
\autoref{lemma:eta_existence} establishes the existence of a tolerance sufficient to retain a safety-representative model, but it does not prescribe an online value for that tolerance, i.e. $j_{k}^{s}$ and hence $\eta_{k}^{s}$ are not known to the controller. In this work, $\eta_{k}$ is therefore treated as a design parameter and selected empirically. Smaller values retain fewer dynamics hypotheses and generally yield a less restrictive safety filter, whereas larger values retain more unresolved model ambiguity and impose a larger collection of safety constraints.

It is worth noting that \eqref{eq:eta_safe_bounds} provides a conservative certified endpoint. In particular, any
\begin{equation}
\eta_{k}\geq\mathcal{E}_{k}^{\max}-\mathcal{E}_{k}^{\min}
\label{eq:eta_full_bank_safe}
\end{equation}
necessarily satisfies $\eta_{k}\geq\eta_{k}^{s}$ under \autoref{ass:safety_representative} and recovers the full-bank robust safety filter.
\end{remark}

The recursion of \autoref{def:dcbf} is defined for a fixed parameter, whereas
the active regime $\boldsymbol{\theta}_{k}$ varies with $k$. For $r=1$ this is
immaterial, since $\psi_{0}=h$ does not depend on the parameter. For $r\geq2$ the
intermediate functions $\psi_{1},\ldots,\psi_{r-1}$ are parameter-dependent, and
propagating their sign across a regime change requires the following condition.

\begin{assumption}[Regime-change compatibility]
\label{ass:regime_compat}
For every $k\in\mathbb{Z}_{\geq 0}$ and every $q=1,\ldots,r-1$,
$\psi_{q}(\mathbf{x}_{k+1};\boldsymbol{\theta}_{k})\geq 0$ implies
$\psi_{q}(\mathbf{x}_{k+1};\boldsymbol{\theta}_{k+1})\geq 0$.
\end{assumption}

\autoref{ass:regime_compat} constrains only the transition between consecutive
regimes: it requires that the change from $\boldsymbol{\theta}_{k}$ to
$\boldsymbol{\theta}_{k+1}$ does not by itself render an intermediate barrier
function negative at the state reached. It is vacuous for $r=1$.

\begin{theorem}[Conditional closed-loop safety of \lbacbf]
\label{theorem:closed_loop_safety}
Let $\psi_{q}(\mathbf{x}_{0};\boldsymbol{\theta}_{0})\geq 0$ for $q=0,\ldots,r-1$. Suppose that for every $k\in\mathbb{Z}_{\geq 0}$, \autoref{ass:safety_representative} and \autoref{ass:regime_compat} hold, the selected tolerance satisfies $\eta_{k}\geq\eta_{k}^{s}$, and the safety-filter problem \eqref{eq:lbacbf_filter} is feasible. Then the true order-$r$ CBF condition is satisfied at every time step:
\begin{equation}
H_{k}^{\mathrm{true}}(\mathbf{u}_{k}^{\star}) \geq 0, \qquad \forall k\in\mathbb{Z}_{\geq 0}
\label{eq:closed_loop_true_hocbf}
\end{equation}
Consequently, the safe set $\mathcal{S}$ is rendered forward invariant.
\end{theorem}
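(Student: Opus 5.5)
The proof has two parts. First I establish the per-step inequality \eqref{eq:closed_loop_true_hocbf}. Then I propagate signs through the HOCBF recursion by induction on $k$.

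\textbf{Per-step CBF condition.} Fix $k$. By hypothesis $\eta_k\geq\eta_k^s$, so \autoref{lemma:eta_existence} gives $j_k^s\in\mathcal{J}_k(\eta_k)$. Problem \eqref{eq:lbacbf_filter} is feasible, and \autoref{ass:safety_representative} holds at $k$. Hence \autoref{lemma:safety_transfer} yields $H_k^{\mathrm{true}}(\mathbf{u}_k^\star)\geq 0$. This holds for every $k$ and gives \eqref{eq:closed_loop_true_hocbf}. This part needs no induction.

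\textbf{Forward invariance.} I would prove by induction the claim
\[
P(k):\quad \psi_q(\mathbf{x}_k;\boldsymbol{\theta}_k)\geq 0\ \text{ for } q=0,\ldots,r-1 .
\]
$P(0)$ is the initial-condition hypothesis. For the inductive step, assume $P(k)$. The true transition is $\mathbf{x}_{k+1}=F(\mathbf{x}_k,\mathbf{u}_k^\star;\boldsymbol{\theta}_k)$, so by \eqref{eq:hocbf_recursion}, for each $q=1,\ldots,r$,
\[
\psi_{q-1}(\mathbf{x}_{k+1};\boldsymbol{\theta}_k)=\psi_q(\mathbf{x}_k,\mathbf{u}_k^\star;\boldsymbol{\theta}_k)+(1-\alpha_q)\psi_{q-1}(\mathbf{x}_k;\boldsymbol{\theta}_k).
\]
For $q=r$ the first term on the right is $H_k^{\mathrm{true}}(\mathbf{u}_k^\star)\geq 0$. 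For $q<r$ it is $\psi_q(\mathbf{x}_k;\boldsymbol{\theta}_k)\geq 0$ by $P(k)$. The second term is nonnegative because $\alpha_q\in(0,1]$ and by $P(k)$. Therefore $\psi_{q-1}(\mathbf{x}_{k+1};\boldsymbol{\theta}_k)\geq 0$ for all $q-1=0,\ldots,r-1$.

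\textbf{Transfer across the regime change.} The parameter must now be switched from $\boldsymbol{\theta}_k$ to $\boldsymbol{\theta}_{k+1}$. For $q=0$ nothing changes, since $\psi_0=h$ is parameter-free. For $q=1,\ldots,r-1$, \autoref{ass:regime_compat} gives $\psi_q(\mathbf{x}_{k+1};\boldsymbol{\theta}_{k+1})\geq 0$. This establishes $P(k+1)$. In particular $h(\mathbf{x}_k)=\psi_0(\mathbf{x}_k)\geq 0$ for all $k$, so $\mathcal{S}$ is forward invariant.

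The only delicate point is this transfer step. The cited invariance result in \autoref{def:dcbf} assumes a fixed parameter, so I redo the recursion explicitly rather than invoking it. The step relies on evaluating the true CBF condition at time $k$ with the same $\boldsymbol{\theta}_k$ that generates $\mathbf{x}_{k+1}$. That matches definition \eqref{eq:true_barrier_value}, and \autoref{ass:regime_compat} handles the rest.
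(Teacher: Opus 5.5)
Your proof is correct and follows the paper's argument. The per-step condition comes from \autoref{lemma:eta_existence} combined with \autoref{lemma:safety_transfer}, and forward invariance comes from an induction that descends the HOCBF recursion under $\boldsymbol{\theta}_k$ and then uses \autoref{ass:regime_compat} (with $\psi_0=h$ parameter-free) to pass to $\boldsymbol{\theta}_{k+1}$; you simply write out the recursion identity and the input-independence of $\psi_q$ for $q<r$ more explicitly than the paper does.
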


\begin{proof}
For any time $k$, $\eta_{k}\geq\eta_{k}^{s}$ and \autoref{lemma:eta_existence} imply $j_{k}^{s}\in\mathcal{J}_{k}(\eta_{k})$. Since the safety-filter problem \eqref{eq:lbacbf_filter} is feasible, \autoref{lemma:safety_transfer} then gives $H_{k}^{\mathrm{true}}(\mathbf{u}_{k}^{\star}) \geq 0$. For the forward-invariance conclusion, suppose $\psi_{q}(\mathbf{x}_{k};\boldsymbol{\theta}_{k})\geq 0$ for $q=0,\ldots,r-1$. Applying \eqref{eq:hocbf_recursion} to $H_{k}^{\mathrm{true}}(\mathbf{u}_{k}^{\star})\geq 0$ and descending the chain gives $\psi_{q}(\mathbf{x}_{k+1};\boldsymbol{\theta}_{k})\geq 0$ for $q=0,\ldots,r-1$, since $\alpha_{q}\in(0,1]$. \autoref{ass:regime_compat} carries this to the incoming regime, $\psi_{q}(\mathbf{x}_{k+1};\boldsymbol{\theta}_{k+1})\geq 0$ for $q=1,\ldots,r-1$, while $\psi_{0}=h$ is parameter-free. The induction therefore closes and $h(\mathbf{x}_{k})\geq 0$ for all $k$.
\end{proof}

\section{Experimental Validation}
\label{sec:experiments}
We evaluate \lbacbf\ in one simulation study and two hardware studies; \emph{a video of the full
hardware runs is on the project's website}. The simulation compares
\lbacbf\ with four adaptive and robust CBF baselines, together with a fixed-model
nominal CBF and an oracle given the true parameters, from identical initial
conditions on a task whose payload scales the input, and ablates the look-back
window $W$ and the bank size $M$. On
hardware, \lbacbf\ is compared with the same filter using a fixed model, so that
adaptation is the only difference. On a Crazyflie~2.1 quadrotor, it filters a
linear quadratic regulator (LQR) under fan-induced wind and under a swinging
payload that is then cut loose. On an F1TENTH race car~\cite{f1tenth2020}, we experiment with slippery plastic and rubber tires where the proposed work identifies the model of a CBF-constrained MPC. The car's tire parameters enter the Fiala model nonlinearly, so no linear-in-parameter adaptive/robust CBF~\cite{taylor2020adaptive,lopez2021racbf} applies.

\subsection{Quadrotor: Simulation Experiments}
\label{subsec:quad_sim}

\subsubsection{Task, dynamics, and filter}
The vehicle must descend from a start band to a goal at $(0,-2.7)\,\mathrm{m}$ through
five windy gates (\autoref{fig:quad_sim_traj}), each includes obstacles of radius
$0.32\,\mathrm{m}$ leaving a $0.84\,\mathrm{m}$ corridor. The corridors alternate between $x=-0.3$
and $x=+0.3\,\mathrm{m}$ on the rows $y\in\{2,1,0,-1,-2\}\,\mathrm{m}$, so the vehicle must move
laterally between gates while all obstacles are constrained. The $26$ initial
conditions are drawn uniformly at random over $x_0\in[-2,2]\,\mathrm{m}$ and
$y_0\in[2.50,2.85]\,\mathrm{m}$, with an initial descent of $0.4\,\mathrm{m/s}$ that reaches the
first gate while every estimator is still converging from
$\hat{\boldsymbol{\theta}}=\mathbf{0}$; runs last $40\,\mathrm{s}$ at $dt=0.02\,\mathrm{s}$. The nominal LQR regulates straight to the goal, so all obstacle avoidance comes from
the safety filter.

The vehicle is the planar quadrotor
\begin{align}
\label{eq:quadrotor_dynamics}
m\dot{v}_x &= (u_0+u_1)\sin\theta + w_x - c_{dx}v_x|v_x|, \nonumber \\
m\dot{v}_y &= (u_0+u_1)\cos\theta - mg + w_y - c_{dy}v_y|v_y|,  \\
J\ddot{\theta} &= \tfrac{\ell}{2}\,(u_1 - u_0), \nonumber
\end{align}
with $\dot{x}=v_x$, $\dot{y}=v_y$, rotor thrusts $(u_0,u_1)$, pitch $\theta$
(distinct from the parameter vector $\boldsymbol{\theta}$), mass $m$,
gravitational acceleration $g$, rotor separation $\ell$, and moment of
inertia $J$.

Four horizontal bands, bounded at $y=1.2$, $0$, and $-1.2\,\mathrm{m}$, carry a lateral
wind $w_x=\pm0.07\,\mathrm{N}$ whose sign alternates between bands, so each of the three
band crossings is an instantaneous, unannounced reversal of the
disturbance---the regime change of \autoref{sec:changing_dynamics}. Outside a
calm pocket of radius $0.3\,\mathrm{m}$ at the goal, where wind and drag vanish, the drag
coefficients are $c_{dx}=c_{dy}=0.10$, and everywhere an unknown payload raises
the mass by $\Delta m=30\%$, so that $m$ becomes $m(1+\Delta m)$ in
\eqref{eq:quadrotor_dynamics} and
$\boldsymbol{\theta}=(w_x,w_y,c_{dx},c_{dy},\Delta m)$. The payload therefore
scales the control-input term of \eqref{eq: system parametric dynamics},
nonlinearly in $\boldsymbol{\theta}$; the hardware bank lumps it into
$(w_x,w_y)$, whereas here it is modeled explicitly.

Each safety constraint is enforced by a second-order CBF on the collective thrust. Because
at $\theta\approx 0$ the two thrusts enter that condition only through their
sum, leaving no lateral authority, we add a fourth-order CBF on the
differential thrust, whose tilt bends the straight nominal reference around obstacles. Both are enforced at $50\,\mathrm{Hz}$. The QP is solved by sequential least-squares quadratic programming. Fixed-gain CBFs do not
guarantee feasibility under input limits; when the solver fails, which in every
case we examined occurred because no admissible input satisfied all conditions,
the implementation applies the nominal command.

The bank is a $15\times3\times6\times6\times7$ grid over $\boldsymbol{\theta}$
plus the zero model, $M=11\,341$, spanning $w_x,w_y\in[-0.33,0.33]\,\mathrm{N}$
($4.7\times$ the true wind), $c_{dx},c_{dy}\in[0,0.55]$, and
$\Delta m\in[0,0.72]$; no grid value coincides with the true wind, drag, or
$\Delta m$, although the wind axes contain the true $w_y=0$. We use $W=5$
($0.1\,\mathrm{s}$) and enforce the barrier against every model whose windowed residual
lies within $\eta$ of the lowest, as in \autoref{sec:lbacbf_filter}: $\eta=0$
certifies the best-fitting model, while $\eta=0.01$ and $\eta=0.05$ retain
sets of tens of candidates out of M, whose size varies from step to step,
widening whenever the recent data separates the bank less sharply. Unless stated otherwise,
\lbacbf\ refers to $\eta=0$.

\begin{figure}
    \centering
    \includegraphics[width=\linewidth]{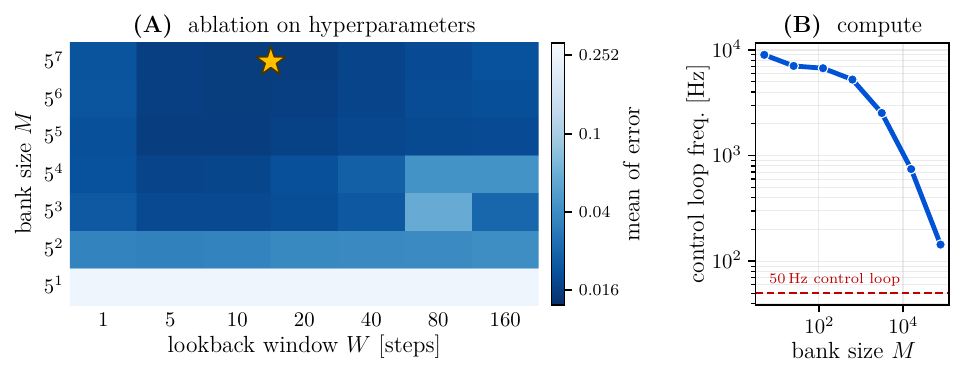}
    \caption{Ablation over bank size $M$ and window $W$. {\bf (A)}~One-step prediction error (star indicates minimum).
    {\bf (B)}~Control rate vs. $M$.}
    \label{fig:quad_sim_ablation}
    \vspace{-5mm}
\end{figure}

\subsubsection{Baselines and metrics}
We evaluate three adaptive CBF filters and one robust CBF filter, alongside
fixed-model and oracle references.
The disturbance-observer CBF
(\textcolor{cDOB}{\mbox{DOB-CBF}})~\cite{wang2023dobcbf}
low-pass filters the nominal model's planar force residual at bandwidth
$33\,\mathrm{s}^{-1}$.
The $\mathcal{L}_1$ adaptive CBF
(\textcolor{cLone}{\mbox{$\mathcal{L}_1$-AC}})~\cite{nguyen2022l1cbf}
estimates the same force using a predictor-based $\mathcal{L}_1$ adaptation
law and a $33\,\mathrm{rad/s}$ low-pass filter.
The high-order robust adaptive CBF
(\textcolor{cRaCBF}{\mbox{HO-RaCBF}})~\cite{cohen2022hocabf}
updates the force estimate by gradient descent within a priori bounds
$|w_x|,|w_y|\leq0.315\,\mathrm{N}$, with a robustness margin that contracts
with excitation.
All three adaptive baselines drive the same two-channel filter as
\lbacbf, retaining both the second-order thrust condition and the
fourth-order tilt condition. The robust CBF
(\textcolor{cRCBF}{\mbox{R-CBF}})~\cite{jankovic2018rcbf}
applies worst-case robustification to the second-order thrust condition
using bounds on wind and drag.
The fixed-model \textcolor{cNom}{\mbox{Nominal~CBF}} assumes
$\boldsymbol{\theta}=\mathbf{0}$, whereas the
\textcolor{cOracle}{\mbox{Oracle}} receives the true
$\boldsymbol{\theta}_k$ at every step. The three adaptive baselines use additive planar-force uncertainty
models, which support their observer constructions and the
linear-in-parameter formulation of
\textcolor{cRaCBF}{\mbox{HO-RaCBF}}.
These implementations do not explicitly identify the payload-dependent
input gain. In contrast, \lbacbf's model bank includes the payload
parameter, allowing its predictions to account for changes in control
effectiveness. The experiment thus evaluates the complete adaptation
designs under simultaneous wind variation and payload mismatch.

A run is \emph{safe} if $h\geq0$ for all ten obstacles, and \emph{reached} if its terminal position lies within
$0.2\,\mathrm{m}$ of the goal.
A \emph{success} requires both safety and reaching the goal.

The cumulative intervention is
$\Delta U=\sum_k\|\mathbf{u}_k-\mathbf{u}_k^{\mathrm{nom}}\|^2$.
Because this quantity accumulates along each method's trajectory,
we compare it on the common subset of 13 initial conditions for which
every method with nonzero success rate completed the task safely.

\begin{table}[t]
\centering
\caption{Quadrotor simulation, 26 random initial conditions per method.
    A \emph{success} is both safe and reached. $\Delta U$ is averaged over the 13
    starts on which every method with a nonzero success rate
    succeeded; n/a otherwise. $\eta$ is the plausible-set tolerance.}
\label{tab:quad_sim_methods}
\footnotesize
\setlength{\tabcolsep}{4pt}
\begin{tabular}{l
    S[table-format=3.0]
    S[table-format=3.0]
    S[table-format=3.0]
    S[table-format=3.1]}
\toprule
Method & {Safe [\%]} & {Reached [\%]} & {Success [\%]} & {$\Delta U$} \\
\midrule
\textcolor{cOracle}{Oracle CBF (truth)}     & 100 & 100 & 100 &  47.2 \\
\lbacbf\ \textcolor{lbacbf}{($\eta=0$)}                          & 100 & 100 & 100 &  42.6 \\
\lbacbf\ \textcolor{lbacbf}{($\eta=0.01$)}                       & 100 & 100 & 100 &  32.7 \\
\lbacbf\ \textcolor{lbacbf}{($\eta=0.05$)}                       & 100 & 100 & 100 &  31.6 \\
\textcolor{cRaCBF}{\mbox{HO-RaCBF}}                &  88 & 100 &  88 & 395.2 \\
\textcolor{cLone}{$\mathcal{L}_1$-AC + CBF} &  69 &  92 &  62 &  61.1 \\
\textcolor{cDOB}{\mbox{DOB-CBF}}                   &  62 & 100 &  62 &  73.4 \\
\textcolor{cRCBF}{\mbox{R-CBF}}                    & 100 &   0 &   0 & {\textcolor{nomGray}{n/a}} \\
\textcolor{cNom}{\mbox{Nominal~CBF}}               &  46 &   0 &   0 & {\textcolor{nomGray}{n/a}} \\
\bottomrule
\end{tabular}
\vspace{-10pt}
\end{table}

\begin{figure*}
    \centering
    \includegraphics[width=\linewidth]{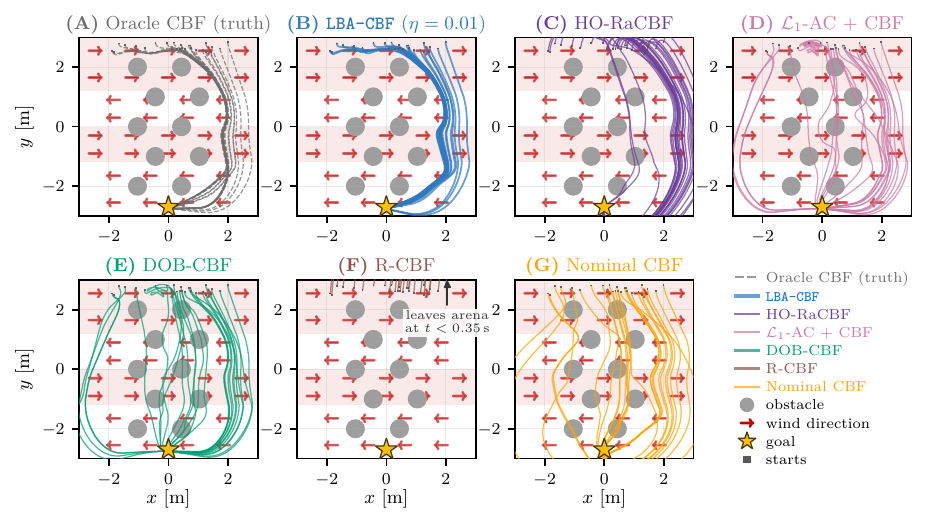} \vspace{-7mm}
    \caption{Quadrotor simulation: all 26 trajectories per method, ordered
    as in \autoref{tab:quad_sim_methods}. The vehicle descends from the grey squares to the goal (star) through five gates of paired obstacles
    (grey discs, whose edge is $h=0$); the wind blows along the red arrows,
    $+x$ in the shaded bands and $-x$ in the white ones.
    \textcolor{cOracle}{{\bf (A)}~\mbox{Oracle}}, \textcolor{lbacbf}{{\bf (B)}}~\lbacbf, \textcolor{cRaCBF}{{\bf (C)}~\mbox{HO-RaCBF}},
    \textcolor{cLone}{{\bf (D)}~\mbox{$\mathcal{L}_1$-AC}}, \textcolor{cDOB}{{\bf (E)}~\mbox{DOB-CBF}},
    \textcolor{cRCBF}{{\bf (F)}~\mbox{R-CBF}}, \textcolor{cNom}{{\bf (G)}~\mbox{Nominal~CBF}}.}
    \label{fig:quad_sim_traj}
    \vspace{-5mm}
\end{figure*}

\subsubsection{Results}
\autoref{tab:quad_sim_methods} and \autoref{fig:quad_sim_traj} summarize all the runs. \lbacbf\ succeeds from every initial condition, matching the
\textcolor{cOracle}{\mbox{Oracle}}, while \textcolor{cLone}{\mbox{$\mathcal{L}_1$-AC}} and \textcolor{cDOB}{\mbox{DOB-CBF}} each succeed from $62\%$:
collisions in $8$ and $10$ runs, plus two \textcolor{cLone}{\mbox{$\mathcal{L}_1$-AC}} runs that stall,
safely, in the start band. \textcolor{cRaCBF}{\mbox{HO-RaCBF}} succeeds from $88\%$ but bows wide of
the obstacles (\autoref{fig:quad_sim_traj}(C)), at $9\times$ the intervention of
\lbacbf. All observed adaptive-baseline collisions were preceded by
CBF infeasibility near an obstacle and application of the
nominal fallback command. Neither \textcolor{cRCBF}{\mbox{R-CBF}} nor the \textcolor{cNom}{\mbox{Nominal~CBF}} ever reaches the goal, and
\textcolor{cRCBF}{\mbox{R-CBF}} is safe on every run only because it climbs out of the arena
within $0.35\,\mathrm{s}$ and never returns (\autoref{fig:quad_sim_traj}(F)). On the
paired set, the intervention of \lbacbf\ is comparable to the
\textcolor{cOracle}{\mbox{Oracle}}'s ($42.6$ vs.\ $47.2$), while
\textcolor{cDOB}{\mbox{DOB-CBF}} needs $1.7\times$ as much and
\textcolor{cLone}{\mbox{$\mathcal{L}_1$-AC}} $1.4\times$. 

\begin{table}[t]
\centering
\caption{Effect of the plausible-set tolerance $\eta$: median closest
    approach to any obstacle over the all runs, each row flown from the same
    course and start.}
\label{tab:eta_margin}
\footnotesize
\setlength{\tabcolsep}{4pt}
\begin{tabular}{l S[table-format=2.1]}
\toprule
Method & {Closest distance to an obstacle [$\mathrm{cm}$]} \\
\midrule
\textcolor{cOracle}{Oracle CBF (truth)}      & 15.2 \\
\lbacbf\ \textcolor{lbacbf}{($\eta=0$)}      & 16.7 \\
\lbacbf\ \textcolor{lbacbf}{($\eta=0.01$)}   & 21.0 \\
\lbacbf\ \textcolor{lbacbf}{($\eta=0.05$)}   & 26.7 \\
\bottomrule
\end{tabular}
\vspace{-10pt}
\end{table}

\emph{Effect of $\eta$.} Because the tolerances succeed on nearly every start,
success rate cannot separate them, but the safety margin can. We fly the oracle
and \lbacbf\ on the original course and on randomly drawn courses that keep the
five gates but shift each corridor laterally by $0.15$--$0.45\,\mathrm{m}$ and each gate
row by up to $0.10\,\mathrm{m}$, each with its own $26$ random starts
(\autoref{tab:eta_margin}). The margin grows with $\eta$: $\eta=0.01$ keeps a
larger clearance than $\eta=0$ on $81\%$ of paired runs, and $\eta=0.05$ than
$\eta=0.01$ on $94\%$, confirming increasing robustness.

\begin{figure*}
    \centering
    \includegraphics[width=0.93\linewidth]{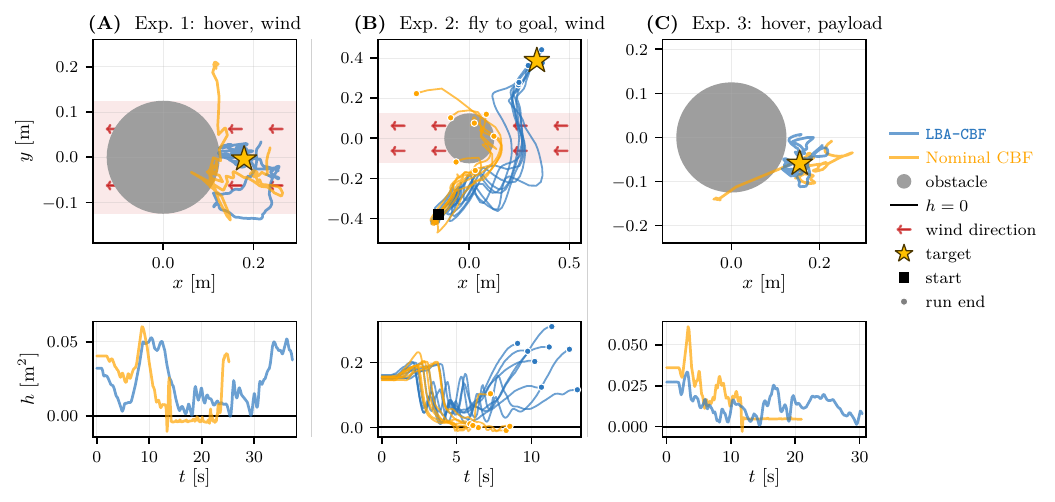}
    \caption{Crazyflie~2.1: \lbacbf\ vs.\ \textcolor{cNom}{\mbox{Nominal~CBF}}.
    {\bf (A)} hover and {\bf (B)} flight to a goal, both in wind; {\bf (C)} hover
    with a swinging payload. {\bf Top:} $xy$ trajectories, obstacle (grey disc),
    wind (red arrows), target (star). {\bf Bottom:} barrier value $h(t)$; $h<0$ is a
    violation.}
    \label{fig:quad_hardware_experiments}
    \vspace{-5mm}
\end{figure*}

\begin{figure}
    \centering
    \includegraphics[width=0.48\textwidth]{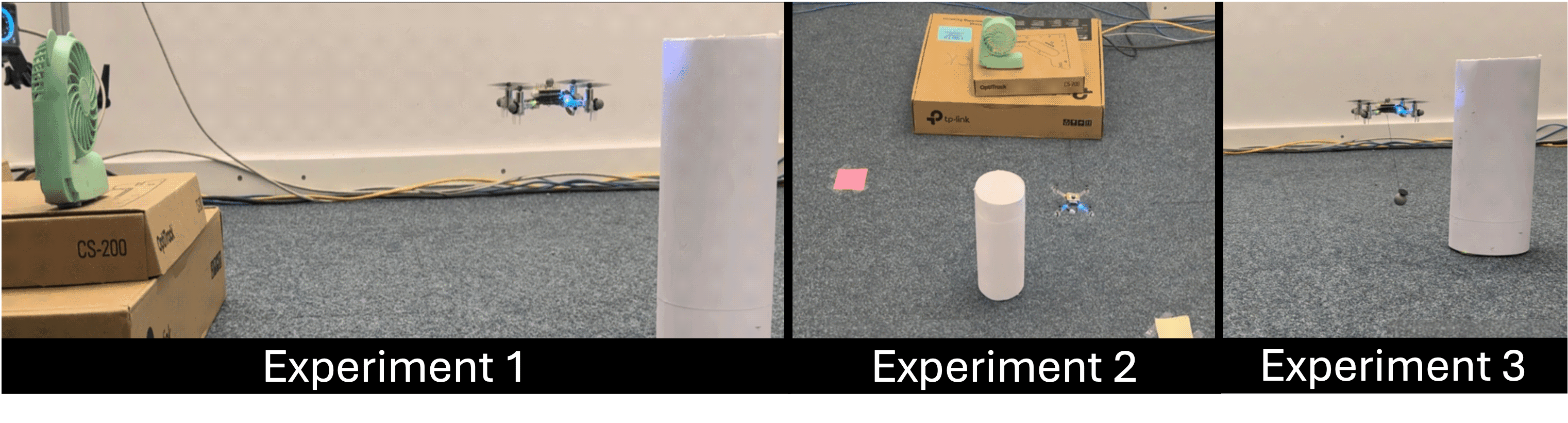}
    \caption{Crazyflie~2.1 hardware setups for Experiments~1--3.}
    \label{fig:hardware_experiments}
    \vspace{-5mm}
\end{figure}

\emph{Ablation.} \autoref{fig:quad_sim_ablation} varies
$W$ and $M$
at $\eta=0.01$, using the same initial conditions and
process noise of $0.01\,\mathrm{m/s}$ per step on $(v_x,v_y)$.
The smallest bank ($M=5$) has persistently high prediction
error. Expanding the bank to include $\Delta m$ ($M=25$)
reduces the error, while additional parameter axes provide
little improvement for $W\geq5$. The window length balances noise sensitivity and adaptation
speed. At $W=1$, the ranking depends on a single noisy transition; longer windows retain pre-change data for longer.

\emph{Compute.} On a CPU (Apple M1 Pro, JAX), the bank update
takes $0.95\,\mathrm{ms}$ at the deployed $M=11,341$ and $6.9\,\mathrm{ms}$ at
$M=78,125$---$1.1\,\mathrm{kHz}$ and $145\,\mathrm{Hz}$.

\subsection{Quadrotor: Hardware Experiments}
\label{subsec:quad_hw}

We deploy \lbacbf\ on a Crazyflie~2.1~\cite{giernacki2017crazyflie} tracked by a motion-capture system. An LQR running at $50\,\mathrm{Hz}$ produces the nominal horizontal
accelerations $\mathbf{u}_{\mathrm{nom}}$, which the safety filter modifies so
that the vehicle avoids a circular obstacle (\autoref{fig:hardware_experiments}).
The filter uses \eqref{eq:quadrotor_dynamics} in the horizontal plane around an
upright cylinder: no gravity term, no drag, and the commanded horizontal
accelerations in place of the rotor thrusts. The unknown parameters are the
planar external forces $\boldsymbol{\theta}=(w_x,w_y)$, used for both wind and
payload.

The bank is a uniform $500\times500$ grid over $(w_x,w_y)$, $M=250,000$,
spanning $[-0.2,0.2]^2\,\mathrm{N}$ in Experiments~1 and~2 and $[-0.1,0.1]^2\,\mathrm{N}$ in
Experiment~3, where the payload produces smaller forces; a grid covers the
unknown wind and payload intensities uniformly, with a look-back window $W=3$ throughout. In all experiments, we set $\eta=0$, selecting the best-fit endpoint of \lbacbf, which keeps a smaller set of barrier constraints, allowing us to isolate the look-back adaptation mechanism while minimizing the online optimization cost.

\textit{Experiment~1 (hover under wind).} The vehicle hovers beside the
obstacle while the fan pushes it toward the unsafe set;
see \autoref{fig:quad_hardware_experiments}(A). With the \textcolor{cNom}{\mbox{Nominal~CBF}}, the
vehicle hovered for only approximately $8\,\mathrm{s}$ before crashing into the
obstacle. With \lbacbf, it hovered for more than
twice as long with the fan running, and $h$ remained at or above zero.

\textit{Experiment~2 (flight to a goal under wind).} The vehicle flies past the
obstacle to a goal while the fan pushes it toward the obstacle;
see \autoref{fig:quad_hardware_experiments}(B). Across eight repetitions,
\lbacbf\ kept $h\geq0$ throughout, whereas the \textcolor{cNom}{\mbox{Nominal~CBF}} crossed $h=0$ in six
of the eight runs shown.

\textit{Experiment~3 (hover with a swinging payload).} A mass hangs from the
vehicle on a rope; see \autoref{fig:quad_hardware_experiments}(C). Because the
payload swings, the disturbance never settles, so the filter must follow a
continuously changing regime rather than converge to one. After confirming that
\lbacbf\ maintained safety in this condition, we cut the rope in flight,
imposing a second, abrupt change of the dynamics. The same test was not possible
with the \textcolor{cNom}{\mbox{Nominal~CBF}}, which had already violated the constraint with the payload
attached.

\subsection{Safe Extreme Driving: Hardware Experiments}
\label{subsec:f1tenth}
We deploy \lbacbf\ on an F1TENTH autonomous racing platform as the model used inside a CBF-constrained MPC, and compare it with the same controller using a fixed nominal model identified on rubber tires. As in~\cite{liao2026llampc_embedded}, the vehicle follows a single-track dynamic bicycle model with Fiala tire forces~\cite{zhang2018fiala}, parameterized by the front and rear cornering stiffnesses and friction coefficients, $\theta=[C_f,C_r,\mu_f,\mu_r]^\top$. The bank contains $10\,000$ models over $C_f\in[175,325]\,\mathrm{N/rad}$, $C_r\in[150,300]\,\mathrm{N/rad}$, and $\mu_f,\mu_r\in[0.1,1.1]$, and is evaluated in JAX with $\eta=0$. The same bank is used for both tire conditions, while the nominal model remains fixed. The MPC is implemented in acados~\cite{acados2021} with horizon $P=20$ and $dt=0.04\,\mathrm{s}$, and runs at $25\,\mathrm{Hz}$. We report model quality as the $P$-step prediction error: the deviation between the state obtained by propagating a model over the MPC horizon under the applied inputs and the measured state.

\textit{Experiment 1 (plastic tires, adaptive drifting)}: With \lbacbf, the car is able to stabilize over 6 laps even with consistent tire slippage with minimal conservatism (\autoref{fig:oval_traj}). The nominal model is overconservative in its optimization, causing it to deviate significantly from the track and crash.

\begin{figure}[th]
    \centering
    \vspace{-2mm}
    \includegraphics[width=.49\linewidth]{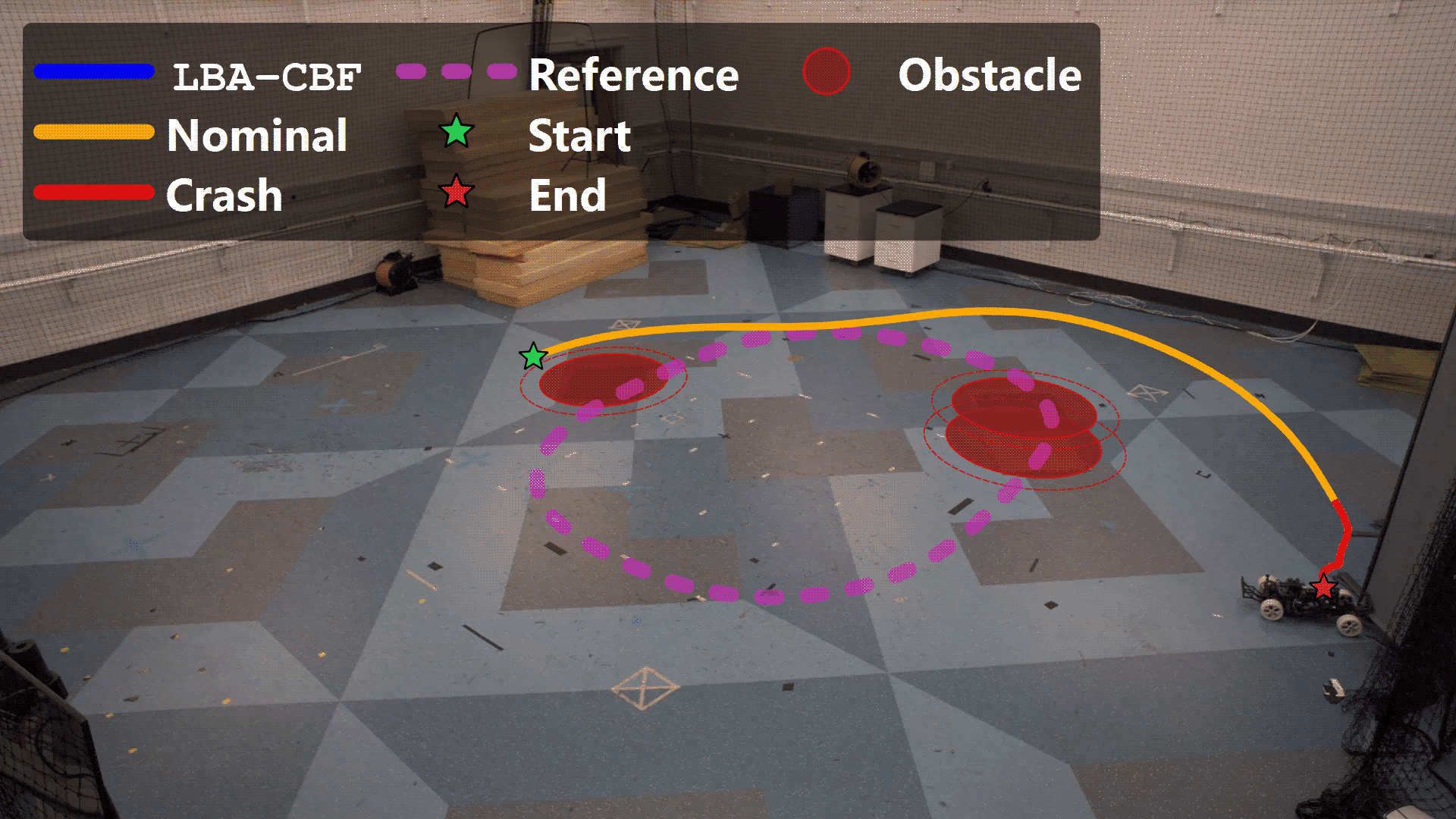}
    \includegraphics[width=.49\linewidth]{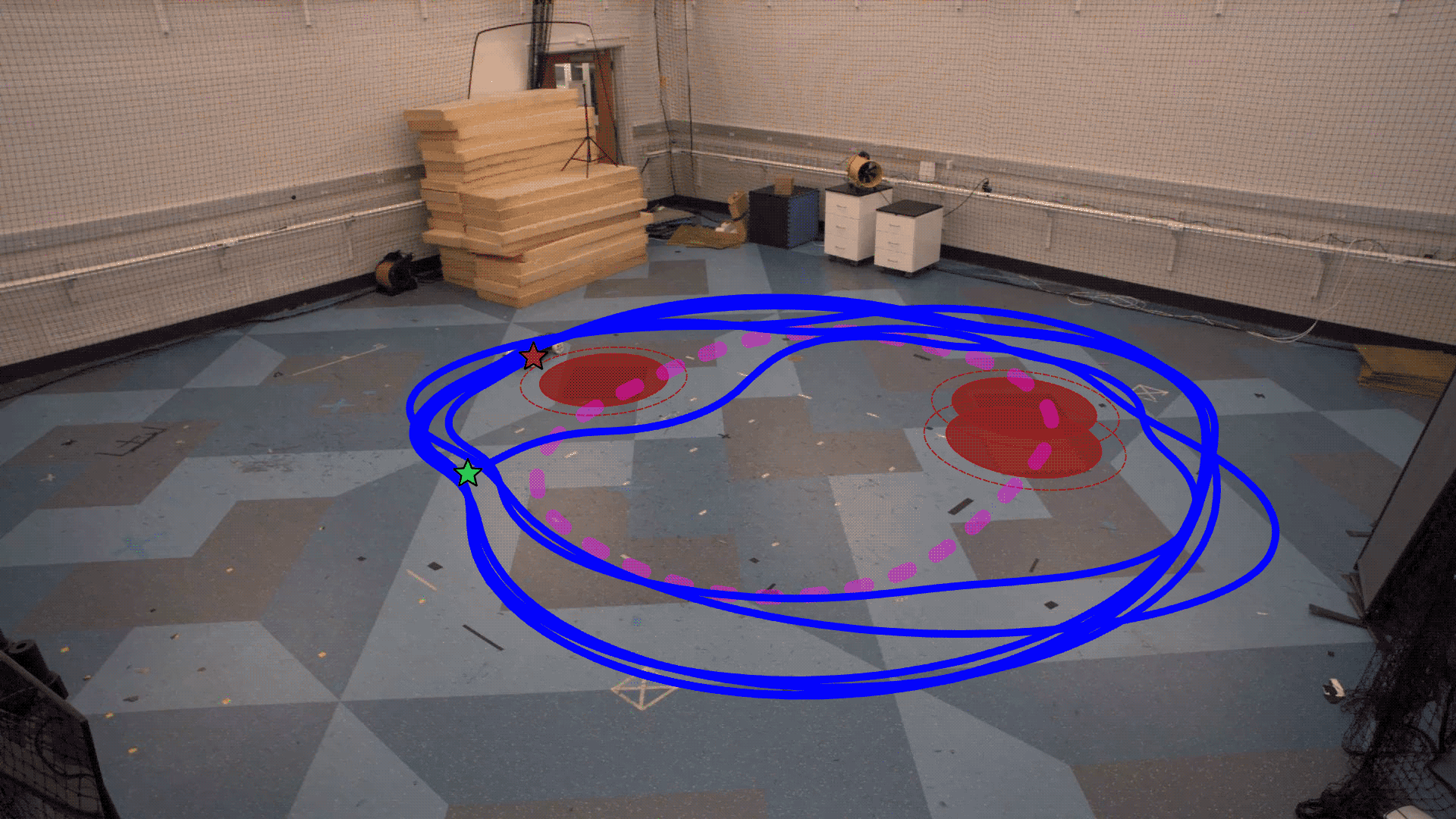}\\[1pt]
    \includegraphics[width=.49\linewidth]{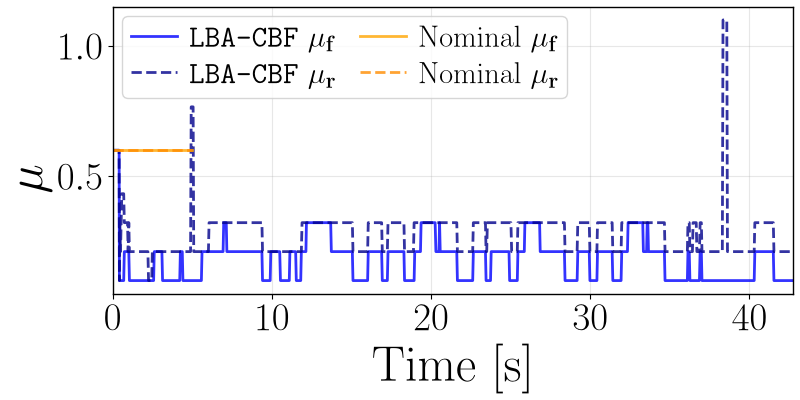}
    \includegraphics[width=.49\linewidth]{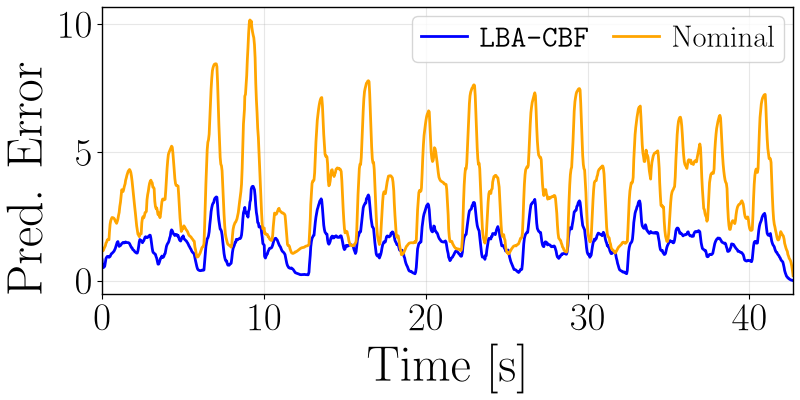}
    \caption{Experiment~1: nominal-model tracking (top left), \lbacbf\ tracking
    (top right), and \lbacbf's friction estimates and $P$-step prediction error
    (bottom).}
    \label{fig:oval_traj}
    \vspace{-7pt}
\end{figure}

In both experiments the friction estimates for \lbacbf{} converge, and the \lbacbf\ model achieves substantially lower prediction error than the nominal model \autoref{fig:oval_traj},~\autoref{fig:fig8_traj}.

\textit{Experiment 2 (rubber tires, inconsistent acceleration)}: With \lbacbf, the car is able to stabilize over 5 laps with minimal conservatism despite some tire slippage and intermittent acceleration and deceleration (\autoref{fig:fig8_traj}). The nominal model is again overconservative, and significant departure from the reference trajectory causes a crash.

\begin{figure}[th]
    \centering
    \vspace{-2mm}
    \includegraphics[width=.49\linewidth]{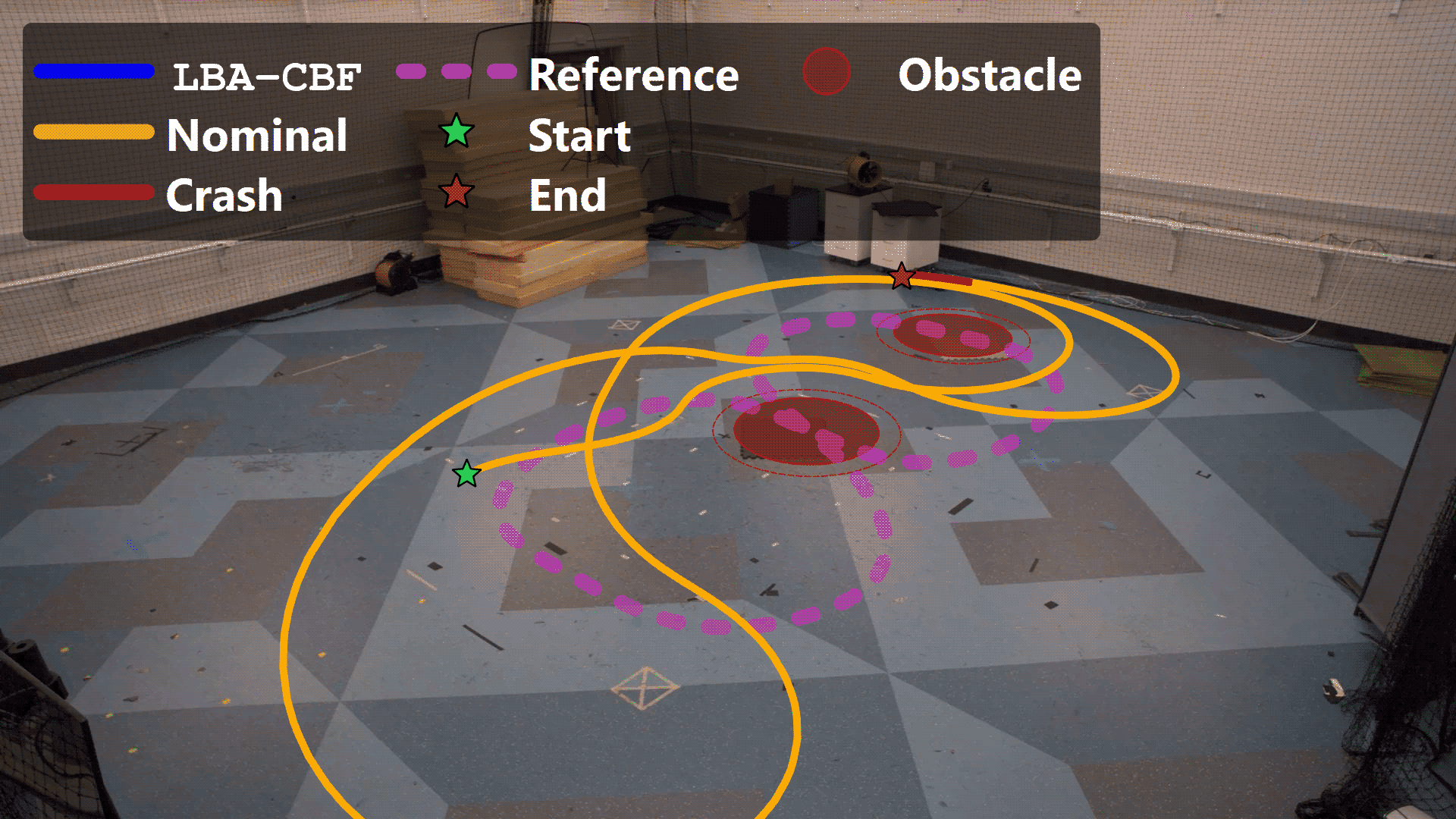}
    \includegraphics[width=.49\linewidth]{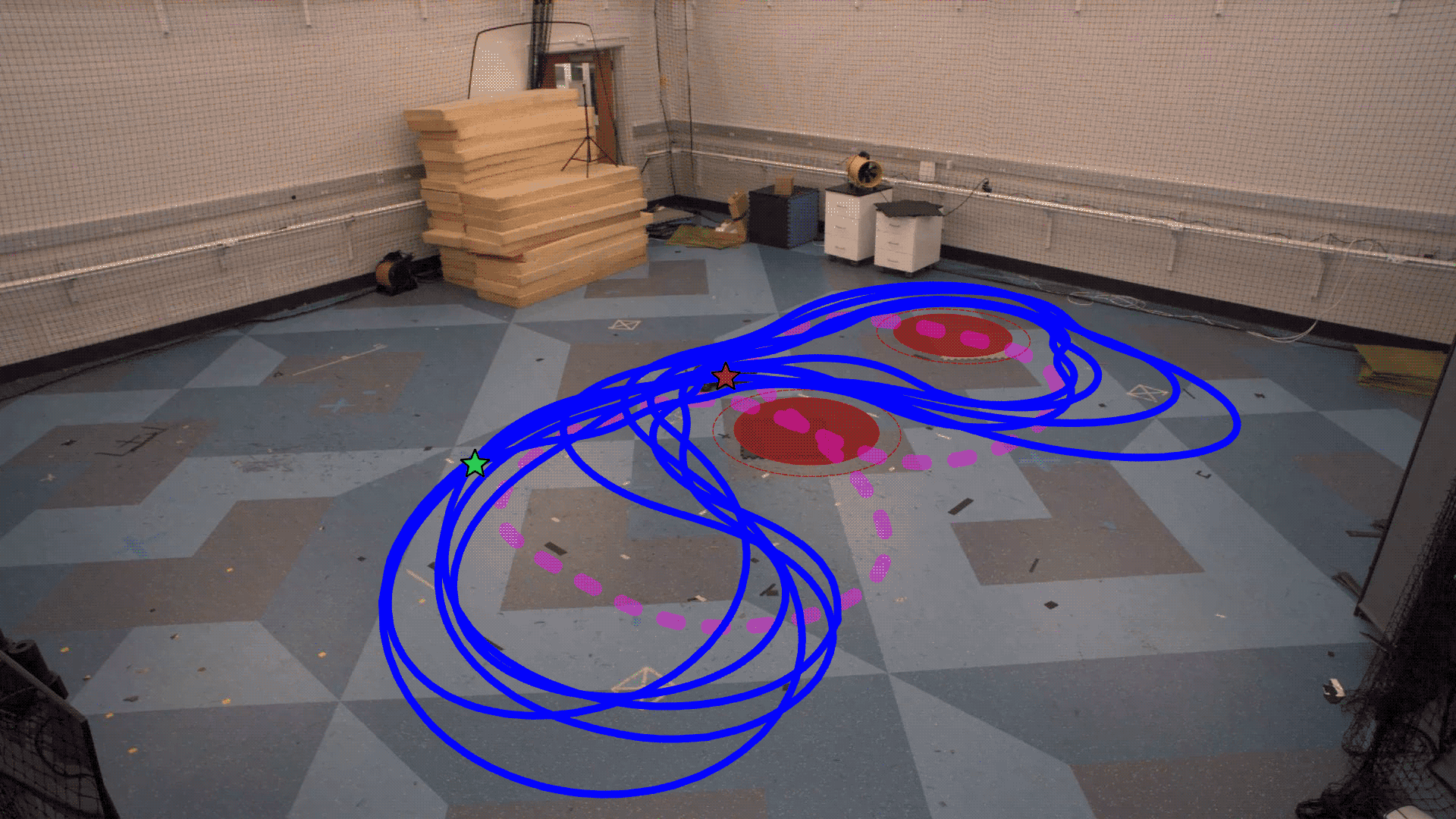}\\[1pt]
    \includegraphics[width=.49\linewidth]{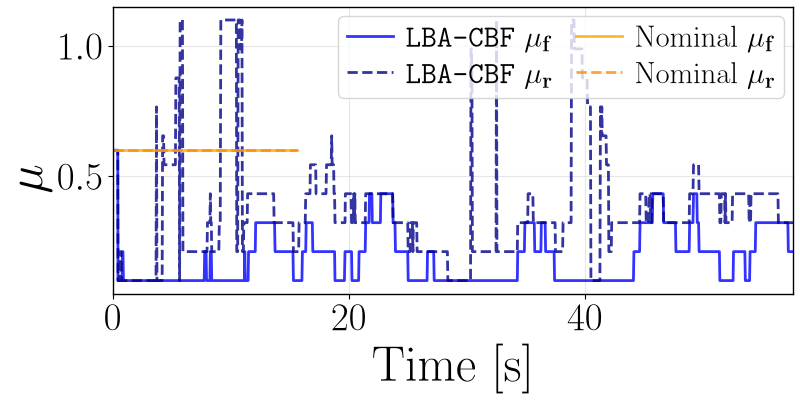}
    \includegraphics[width=.49\linewidth]{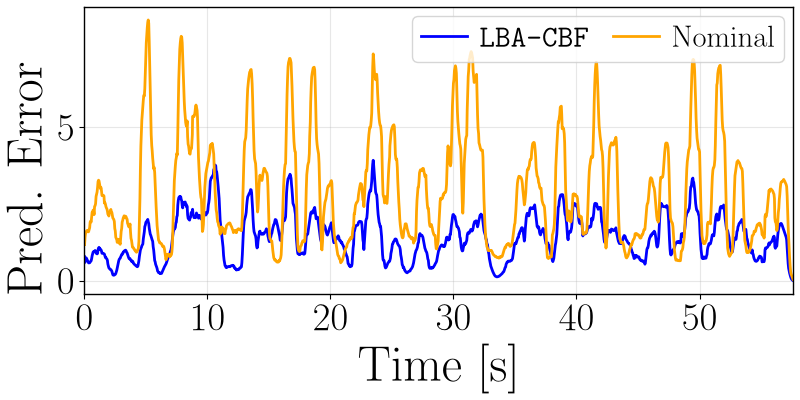}
    \caption{Experiment~2: nominal-model tracking (top left), \lbacbf\ tracking
    (top right), and \lbacbf's friction estimates and $P$-step prediction error
    (bottom).}
    \label{fig:fig8_traj}
\end{figure}

\vspace{-6mm}
\section{Conclusions}
We introduced \lbacbf, a safety filter that uses recent prediction error to decide which dynamics hypotheses the barrier must be certified against: a finite bank is ranked over a look-back window, and a single tolerance spans best-fit adaptation and full-bank robust filtering. In simulation and on Crazyflie~2.1 and F1TENTH hardware, it adapts at control rate to abrupt wind reversals, an unmodeled payload, and changes of tire, where fixed-model, robust, and continuously adapting filters become unsafe or overly conservative. The guarantee remains conditional on bank adequacy and on filter feasibility; constructive conditions for these, and online adaptation of the tolerance, are left to future work.

\printbibliography
\end{document}